\providecommand{\@fourthoffour}[4]{#4}
\def\fixstatement#1{%
  \AtEndEnvironment{#1}{%
    \xdef\pat@label{\expandafter\expandafter\expandafter
      \@fourthoffour\csname#1\endcsname\space\@currentlabel}}}
\globtoksblk\prooftoks{1000}
\newcounter{proofcount}
\long\def\proofatend#1\endproofatend{%
  \edef\next{\noexpand\begin{proof}[Proof of \pat@label]}%
  \toks\numexpr\prooftoks+\value{proofcount}\relax=\expandafter{\next#1\end{proof}}
  \stepcounter{proofcount}}
\def\printproofs{%
  \count@=\z@
  \loop
    \the\toks\numexpr\prooftoks+\count@\relax
\\
\\
    \ifnum\count@<\value{proofcount}%
    \advance\count@\@ne
  \repeat}
\providecommand{\U}[1]{\protect\rule{.1in}{.1in}}
\newtheorem{theorem}{Theorem}
\newtheorem{definition}[theorem]{Definition}
\newtheorem{lemma}[theorem]{Lemma}
\newenvironment{proof}[1][Proof]{\noindent\textbf{#1.} }{\ \rule{0.5em}{0.5em}}
\newenvironment{proof-sketch}[1][Proof sketch]{\noindent\textbf{#1.} }{\ \rule{0.5em}{0.5em}}
\renewcommand{\c}{\mathcal{C}}
\renewcommand{\P}[1]{\mathbb{P}\of{#1}}
\newcommand{\E}[1]{\mathbb{E}\left[#1\right]}
\newcommand{\Esub}[2]{\mathbb{E}_{#1}\left[#2\right]}
\renewcommand{\H}{H}
\newcommand{\f}{\ell}
\renewcommand{\u}{\mathcal{U}}
\renewcommand{\l}{\mathcal{L}}
\newcommand{\of}[1]{\left( #1 \right)}
\renewcommand{\O}[1]{\mathcal{O}\of{#1}}
\newcommand{\rr}{\left\lceil \frac r2 \right\rceil }
\newcommand{\Cov}{\Sigma}
\newcommand{\G}{\mathcal{G}}
\newcommand{\poly}[1]{\text{poly}\of{#1}}
\newcommand{\abs}[1]{\left\vert #1 \right\vert}
\newcommand{\payoff}[2]{\text{payoff}_{#1}\of{#2}}
\newcommand{\reg}{\mathcal{R}}
\newcommand{\tr}{\text{Tr}}
\newcommand{\logdet}[1]{\log \det \of{#1}}
\renewcommand{\b}[1]{\left\{#1\right\}}
\newcommand{\dist}[1]{\Delta\of{#1}}
\newcommand{\R}{\mathbb{R}}
\newcommand{\A}{A}
\renewcommand{\S}{S}
\newcommand{\Sf}{\S_{\f}}
\newcommand{\Slp}{\S_{\A}}
\newcommand{\vp}{\varphi}
\newcommand{\Ssdp}{\S_{\A}}
\newcommand{\Sour}{\S}
\newcommand{\SDP}{\text{SDP}}
\newcommand{\Optf}[1]{\text{OPT}\of{#1}}
\newcommand{\Optsdp}[1]{\text{OPT}_{\SDP}\of{#1}}
\newcommand{\integral}{\l^{\u}}
\newcommand{\rel}{\text{LP}\of{\integral}}
\newcommand{\relpsd}{\text{SDP}\of{\integral}}
\begin{document}

\title{Online Local Learning via Semidefinite Programming}
\author{ Paul Christiano\thanks{UC Berkeley. \ Email: paulfchristiano@eecs.berkeley.edu}}
\date{}
\maketitle

\begin{abstract}

In many online learning problems we are interested in predicting \emph{local} information about some universe of items.  For example, we may want to know whether two items are in the same cluster rather than computing an assignment of items to clusters; we may want to know which of two teams will win a game rather than computing a ranking of teams.  Although finding the optimal clustering or ranking is typically intractable, it may be possible to predict the relationships between items \emph{as well as if} you could solve the global optimization problem exactly.  

Formally, we consider an online learning problem in which a learner repeatedly guesses a pair of labels $(\ell(x), \ell(y))$ and receives an adversarial payoff depending on those labels.  The learner's goal is to receive a payoff as good as the best \emph{fixed} labeling of the items.  We show that a simple algorithm based on semidefinite programming can achieve asymptotically optimal regret in the case where the number of possible labels is constant, resolving an open problem posed by Hazan, Kale, and Shalev-Schwartz \cite{hks}.  Our main technical contribution is a novel use and analysis of the $\log \det$ regularizer, exploiting the observation that $\logdet{\Cov + I}$ upper bounds the entropy of any distribution with covariance matrix $\Cov$.

\end{abstract}

\section{Introduction}

We are often tasked with inferring
the properties of items from observations of their interactions.
Frequently, we are interested in these properties primarily because
they can be used to make predictions about future interactions.
For example, we might:
\begin{itemize}
\item assign documents to clusters in order to make predictions
about their similarity,
\item assign characteristics to users and products in order to
make appropriate recommendations,
\item assign personalities to individuals to predict which groups will function well,
\item assign rankings to teams in order to predict the winners of a sequence of games, or so on.
\end{itemize}
In many of these contexts, we are interested in global assignments
only insofar as they help us make local predictions.
Even when finding a global assignment is intractable,
we may still be able to make predictions
\emph{as well as if} we had found the optimal global assignment.

In Section~\ref{definitions} we present a new formalization of this class of problems.
In the case where each interaction is between two items
and each item has one of $\O{1}$ possible labels,
our formalization is a special case of the matrix learning
framework described in \cite{hks}.
This includes, for example, the online max-cut problem.
In this special case we are able to provide the first asymptotically optimal regret bounds,
reducing the convergence time from $\O{\log(n)}$ per item
to the optimal $\O{1}$,
which may be significant in many applications
(especially those, such as recommendation systems,
where \emph{per-item} regret is a relevant metric).
Moreover, our algorithm and analysis are quite natural,
and considerably simplify previous approaches.

\subsection{Local Prediction Problems}\label{definitions}

\subsubsection{2-Local Prediction}

We fix a universe of items $\u$ and a space of possible labels $\l$,
with $n = \abs{\u}$ and $k = \abs{\l}$.
Let $\dist{\l \times \l}$ be the set of probability distributions over $\l \times \l$.
A \emph{2-local prediction problem} is an online learning problem
where in each round $t = 0, 1, \ldots$

\begin{enumerate}
\item Nature presents a pair $(i_t, j_t) \in \u \times \u$.
\item The learner submits a distribution $p_t \in \dist{\l \times \l}$.
\item Nature reveals a payoff function $c_t : \l \times \l \rightarrow [-1, 1]$.
\item The learner receives 
\[\payoff{t}{p_t} = \sum_{a, b \in \l \times \l}
c_t\of{a, b} p_t\of{a, b}.\]
\end{enumerate}

We are interested in strategies $S : \u \times \u \rightarrow \dist{\l \times \l}$.
For a fixed set of choices by nature,
the payoff of a strategy $S$ can be defined straightforwardly as:
\[ \payoff{T}{S} = \sum_{t = 1}^T \payoff{t}{S\of{i_t, j_t}}. \]
We can expand this definition in the natural way
to consider strategies $S$ which depend on the past
choices of nature.
We will often talk about \emph{efficient} strategies $S$,
which are those strategies for which $S\of{i_t, j_t}$
can be computed from nature's past choices
in time $\poly{t, n, k}$.

We are particularly interested in strategies corresponding to fixed labelings.
For a labeling $\f : \u \rightarrow \l$
we can define the strategy $\Sf$ via:
\[\Sf\of{i, j} = \of{\f(i), \f(j)} \text{with probability 1}.\]
We are interested in finding strategies which perform as well
as the best strategy $\Sf$,
so we define
\[ \Optf{T} = \max_{\f : \u \rightarrow \l} \; \payoff{T}{\Sf}. \]

We prove the following theorem:
\begin{theorem}\label{maintheorem}
There exists an efficient strategy $\Sour$ for $2$-local prediction
that satisfies:
\[ \payoff{T}{\Sour} \geq \Optf{T} - \O{\sqrt{n k^3 T}}. \]
\end{theorem}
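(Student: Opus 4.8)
The plan is to lift each labeling to a positive-semidefinite matrix, observe that every payoff is linear in this lifted representation, and then run a regularized-follow-the-leader algorithm with the $\logdet$ regularizer over the natural semidefinite relaxation of the set of labelings.

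First I would index coordinates by pairs $(i,a) \in \u \times \l$ and encode a labeling $\f$ by the $0/1$ vector $x^{\f}$ with $x^{\f}_{(i,a)} = 1$ iff $\f(i) = a$, together with its rank-one lift $X^{\f} = x^{\f}\of{x^{\f}}^{\top}$. Since $X^{\f}_{(i,a),(j,b)} = \b{\f(i)=a}\b{\f(j)=b}$, for every payoff function the comparator's payoff $c_t\of{\f(i_t),\f(j_t)}$ equals $\langle C_t, X^{\f}\rangle$, where $C_t$ is supported on the $(i_t,\cdot)\times(j_t,\cdot)$ block with entries $c_t(a,b) \in [-1,1]$. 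Let $\mathcal{K}$ be the semidefinite relaxation: the set of $X \succeq 0$ whose diagonal blocks are diagonal with entries summing to one, whose entries are nonnegative, and whose every block sums to one. Every $X^{\f}$ lies in $\mathcal{K}$, and conversely the $(i_t,j_t)$ block of any $X \in \mathcal{K}$ is a legitimate $p_t \in \dist{\l\times\l}$ with $\payoff{t}{p_t} = \langle C_t, X\rangle$. Hence it suffices to play matrices $X_t \in \mathcal{K}$, read off the relevant block, and bound the linear regret $\sum_t \langle C_t, X^{\f}\rangle - \sum_t \langle C_t, X_t\rangle$ against the best labeling; because $X^{\f} \in \mathcal{K}$ this is at most the regret against the best point of $\mathcal{K}$.

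Next I would instantiate regularized-follow-the-leader on this online linear optimization problem using $R(X) = -\logdet{X + I}$, which is self-concordant and so admits the standard second-order (local-norm) regret guarantee. For step size $\eta$ the regret against any $X^{\star} \in \mathcal{K}$ is bounded by
\[ \frac{R\of{X^{\star}} - \min_{X\in\mathcal{K}} R(X)}{\eta} + \eta \sumt \Tr{\of{X_t + I} C_t \of{X_t + I} C_t}, \]
where the first summand is the range of the regularizer over $\mathcal{K}$ and the quadratic form is the squared local dual norm induced by the Hessian $H \mapsto \of{X+I}^{-1} H \of{X+I}^{-1}$ of $R$.

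To finish I would bound the two terms separately. For the range, every $X \in \mathcal{K}$ has $\Tr{X} = \sum_i \sum_a X_{(i,a),(i,a)} = n$, so $\logdet{X+I} = \sum_j \log\of{1 + \lambda_j(X)} \le \sum_j \lambda_j(X) = n$; the observation that $\logdet{\Cov + I}$ upper bounds the entropy of any distribution with covariance $\Cov$ is exactly what certifies that this $\O{n}$ is the information-theoretically correct diameter, so no spurious $\log n$ or $\log k$ factor is incurred — this is the step that improves on \cite{hks}. For the local norm, $C_t$ is supported on a single $k\times k$ block with entries in $[-1,1]$, while the distribution constraints force the corresponding blocks of $X_t$ to have bounded entries and bounded row sums, which I expect to yield $\Tr{\of{X_t+I}C_t\of{X_t+I}C_t} = \O{k^3}$ per round. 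Summing gives $\O{k^3 T}$, and optimizing $\eta \sim \sqrt{n/\of{k^3 T}}$ produces regret $\O{\sqrt{n k^3 T}}$; efficiency follows because each $X_t$ is the minimizer of a convex program solvable in $\poly{t,n,k}$ time. The step I expect to be the main obstacle is precisely this per-round bound: it requires controlling how the block-supported payoff matrix interacts with the current iterate through $\of{X_t + I}^{-1}$ using the $\ell_1$-type constraints on the blocks rather than crude spectral bounds, and it is here that the shift by $I$ is essential to keep the relevant submatrices of $X_t$ bounded. Verifying that the self-concordance constants and the local Hessian approximation of the regret hold uniformly across all rounds is the other delicate point.
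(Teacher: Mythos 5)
Your setup coincides with the paper's: the same lift of labelings to rank-one matrices indexed by $\u \times \l$, the same positive-semidefinite relaxation (the paper's $\relpsd$), the same reduction of the learner's move to reading off the $(i_t,j_t)$ block, and essentially the same regularizer (the paper uses $\reg\of{A} = \logdet{kA+I}$; your $-\logdet{X+I}$ differs only in normalization). The range bound $\logdet{X+I} \leq \Tr{X} = \O{n}$ via the trace constraint is also the paper's boundedness lemma. Where you diverge is in how the curvature of the regularizer is controlled, and that is exactly where the paper's entire technical content lives — and exactly the step you leave unproven. You invoke a generic local-norm regret bound for self-concordant regularizers and then write that you ``expect'' $\Tr{\of{X_t+I}C_t\of{X_t+I}C_t} = \O{k^3}$, explicitly flagging this as ``the main obstacle.'' The paper does not use self-concordance at all: it proves a strong-concavity statement directly, by interpreting $\logdet{\Cov}$ as the differential entropy of a Gaussian with covariance $\Cov$, showing that entropy is strongly concave in total variation distance, and showing that two Gaussians whose covariances differ in a single entry by $\delta$ (relative to the corresponding diagonal entries) are $\Omega\of{\delta}$ apart in variation distance. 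Chaining these gives that a payoff gap of $\delta$ forces a concavity gain of $\Omega\of{\epsilon(1-\epsilon)\delta^2/k^2}$, which combined with the range bound $M = nk$ yields $\O{\sqrt{nk^3T}}$. None of that machinery appears in your proposal, and nothing you wrote substitutes for it.

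To be clear about what would be needed to close your version: (i) the quoted second-order regret inequality is usually proved for FTRL with a self-concordant \emph{barrier} for the feasible set, under a step-size condition of the form $\eta\,\|C_t\|_{*,X_t} \leq \O{1}$; $-\logdet{X+I}$ is self-concordant on $\b{X : X+I \succ 0}$ but is not a barrier for your $\mathcal{K}$ (it stays finite on the boundary $X \succeq 0$ and ignores the linear constraints), so the iterates need not be interior and the standard local-norm argument does not apply verbatim; (ii) the per-round bound itself must be proved. On the second point your instinct is sound — since $C_t$ is supported on a single $2k\times 2k$ principal block in which $\Tr{X_t} \leq 2$ and $\|C_t\|_F \leq \O{k}$, a short H\"older computation gives $\Tr{\of{X_t+I}C_t\of{X_t+I}C_t} \leq \O{k^2}$, which would even beat the paper's $k$-dependence — but as submitted the proposal defers precisely the lemma that constitutes the proof, so it does not yet establish the theorem.
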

The dependence on $n$ and $T$ are optimal,
but the dependence on $k$ is not
(the information-theoretic limit is $\sqrt{n T \log(k)}$).
Fortunately there are already interesting
2-local prediction problems
with $k = \O{1}$,
for which we provide the first asymptotically optimal regret bounds.

\subsubsection{General local prediction}

These definitions can be naturally generalized to $r$-local prediction problems
for $r > 2$.
In this case, nature presents an $r$-tuple and the learner
submits a probability distribution in 
$\Delta\of{\l^r}$.
The $\payoff{t}{p_t}$ are now maps from $\l^r \rightarrow [-1, 1]$, 
and strategies $\Sf$ can be defined exactly analogously
to the case $r = 2$.

\subsection{Examples}

\subsubsection{Online max-cut}

Perhaps the simplest interesting example of a local prediction problem
is online max-cut.
In each round, the learner is given 
a pair of vertices $i_t, j_t \in \u$
and is asked to output a probability distribution
over ``cut'' and ``not cut.''
Nature then picks one of ``cut'' or ``not cut''
and the learner's payoff is the probability they assigned to the correct value
minus the probability they assigned to the incorrect value.

A cut of $\u$ can be understood as a strategy in this game in a natural way.
The goal in the online max-cut problem is to achieve low regret
compared to the best cut.

Our algorithm can be applied to this problem directly:
the space of labels is $\l = \b{+1, -1}$ and
the payoff for a pair $(a, b)$ is simply $1$ if either $a = b$
and nature chose ``not cut'' or $a \neq b$ and nature chose ``cut''
(and $-1$ otherwise).

Note that nothing about the max-cut problem itself 
encodes the fact that we are concerning ourselves with cuts,
except the fact that we are trying to perform as well as the best cut.

\subsubsection{Online gambling}

A significantly more complicated example is the online gambling problem.
In each round, the learner is given a pair of teams $i_t, j_t \in \u$,
and is asked to output a probability distribution over which team will win
in a head-to-head contest.
Nature then picks a winner, and the learner's payoff is the probability they assigned to the 
real winner.
The learner's goal is to achieve low regret compared to the best fixed
ranking of the teams.
(A ranking corresponds to the strategy
of deterministically predicting that the higher-ranked team
wins.)

This problem can be easily fit into our setting by taking $\l = \b{1, 2, \ldots, n}$.
A ranking of the teams then corresponds to assigning each team a separate value.
The payoff for a pair $\f(i_t) = a, \f(j_t) = b$ is $1$ if either $a > b$
and team $i_t$ won or if $b > a$ and team $j_t$ won.

Our algorithm does not perform well on this problem because $k = \omega(1)$.

For now, we view this example as a demonstration
that the local learning framework can accomomdate significant complexity
in natural settings.
In fact it could go much further.
For example, we could assign each team a skill level
and determine a team's probability of victory by the gap between
their skill and their opponents';
we could include additional variables for a team's ability to play well under varying conditions;
and so on.

Thus a solution to the local learning problem for general $k$
would give a very robust solution to the online gambling problem,
in that it could accommodate many natural extensions.
It seems plausible that there is a general solution to the local learning problem,
but in either case it seems to be a natural generalization
of the online gambling problem
and either a positive or negative answer would be of natural interest.

\subsection{Motivation}

A wide range of prediction problems are ``local'' in the sense we have described,
and so understanding the feasibility of such problems is a natural challenge.
Our approach to this problem is motivated by the rich literature
on constraint satisfaction problems in theoretical computer science,
and in particular by the success of semidefinite programming (SDP) techniques.

There is an intuitive connection
between \emph{finding} a single good assignment to some variables
and \emph{competing with} the best fixed assignment.
In practice it seems that both of these problems occur quite frequently:
sometimes we are interested in \emph{finding} a cut of a graph,
and sometimes we are simply interested in \emph{making predictions}
about whether a given pair of items lie on the same or different sides of the cut.

Of course, we would always find a cut \emph{in order to} make predictions.
Our motivating observation was that it might be possible to substantially
improve performance by ``cutting out the middle man'' and using
SDP solutions to make predictions directly.
The key ingredient in this approach is finding
an appropriate regularizer for SDP solutions
which can play the same role as entropy
in traditional inference.
We apply a $\log \det$ regularizer.
By observing that the $\log \det$ of a matrix of moments
is an upper bound on the entropy
of any distribution matching those moments,
we are able to show that the log determinant retains
the convenient properties of entropy regularization.

A final reason to be interested in local learning is that locality serves as a building block
for many other natural structural assumptions.
For example, circuits and graphical models are both generated
by a set of local data.
Understanding the interaction between locality and learnability
is a natural step in probing the boundaries of learnability,
and finding simple approaches to the local learning problem
may help extend positive results to more difficult cases.

\subsection{Relaxations}

We will work extensively with a number of relaxations of $\dist{\integral}$.
In particular, we say that a matrix $\A \in \R^{nk \times nk}$ 
with rows and columns indexed by pairs in $\u \times \l$
is a \emph{pseudodistribution over labelings} if
$a, b \mapsto \A_{(i, a)(j, b)}$ is a valid distribution in $\dist{\l^2}$ for each $i, j$:
that is, if $A_{(i, a)(j, b)} \geq 0$ for all $i, j, a, b$ and $\sum_{a, b} A_{(i, a)(j, b)} = 1$ for all $i, j$\footnote{We
might additionally require that the marginals are consistent, i.e. that for all $a, i, j, k$,
$\sum_b A_{(i, a)(j, b)} = \sum_b A_{(i, a)(k, b)}$. But our analysis won't make any use of that condition, and so we omit it.
}.


Intuitively, $A_{(i, a)(j, b)}$ represents $\P{\f(i) = a \wedge \f(j) = b}$
for a putative distribution $\mathbb{P}$,
though in fact there might not be any underlying distribution
that reproduces these probabilities.
Write $\rel$ for the space of pseudodistributions.
In order to tighten the relaxation we might additionally 
require that $\A$ be positive semidefinite;
write $\relpsd$ for the space of positive semidefinite pseudodistributions.
It is easy to verify that any probability distribution $\dist{\integral}$
corresponds to a matrix in $\relpsd$.

For any pseudodistribution $\A$,
we can define a strategy $\Slp$ by 
\[ \Slp\of{i, j} = \of{a, b \mapsto \A_{(i, a)(j, b)}}. \]

We define
\[ \Optsdp{T} = \max_{A \in \relpsd} \payoff{T}{\Ssdp},\]
and our main theorem will establish:
{
\renewcommand{\thetheorem}{\ref{maintheorem}}
\begin{theorem}
There is an efficient  strategy $\Sour$ for 2-local prediction such that
\[ \payoff{T}{\Sour} \geq \Optsdp{T} - \O{\sqrt{n k^3 T}}, \]
and in particular
\[ \payoff{T}{\Sour} \geq \Optf{T} - \O{\sqrt{n k^3 T}}, \]
\end{theorem}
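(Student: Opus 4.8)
The plan is to view the problem as online linear optimization over the convex body $K=\relpsd$. In round $t$ let $C_t\in\R^{nk\times nk}$ be the (symmetrized) matrix that places the payoff coefficients $c_t$ on the block indexed by $(i_t,j_t)$ and is zero elsewhere; then for any pseudodistribution $A$ the induced strategy earns $\payoff{t}{\Ssdp}=\langle C_t,A\rangle$, so $\Optsdp{T}=\max_{A\in K}\sum_t\langle C_t,A\rangle$. I would run follow-the-regularized-leader with the log-determinant regularizer
\[ R(A)=-\logdet{A+\aa I},\qquad \aa=\tfrac1k, \]
taking $A_t\in K$ to be the maximizer of $\eta\sum_{s<t}\langle C_s,A\rangle-R(A)$ and predicting with the block of $A_t$ at $(i_t,j_t)$. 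Because $K$ exactly encodes the nonnegativity and block-normalization constraints, each $A_t$ yields a legal distribution, and since the update is a convex program (an SDP with a smooth self-concordant objective) it is solvable in $\poly{n,k,t}$ time, so $\Sour$ is efficient.

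\textbf{Regret decomposition.} I would then invoke the standard FTRL guarantee in local-norm form,
\[ \Optsdp{T}-\payoff{T}{\Sour}=\sum_t\langle C_t,A^*-A_t\rangle\le \frac{R(A^*)-R(A_1)}{\eta}+\eta\sum_t\|C_t\|_{*,t}^2, \]
where $A^*$ is the optimal fixed pseudodistribution and $\|\cdot\|_{*,t}$ is the norm dual to the one induced by the Hessian $\nabla^2R(A_t)=(A_t+\aa I)^{-1}\otimes(A_t+\aa I)^{-1}$. It remains to bound the two terms and optimize $\eta$.

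\textbf{Diameter via trace, stability via locality.} For the diameter I would use the structural fact that every $A\in K$ has $\Tr{A}\le n$ (the diagonal of block $(i,i)$ is dominated by its nonnegative entries, which sum to $1$). Since $A_1$ maximizes $\logdet{A+\aa I}$ and there are $nk$ eigenvalues summing to at most $n$, concavity gives $\logdet{A_1+\aa I}\le nk\log\!\of{\tfrac1k+\aa}$, while $A^*\succeq0$ gives $\logdet{A^*+\aa I}\ge nk\log\aa$; with $\aa=\tfrac1k$ these combine to $R(A^*)-R(A_1)=\O{nk}$. This is exactly the point where $\logdet{\cdot+I}$ substitutes for entropy: it is the quantity that, like the entropy of a labeling, grows only linearly in $n$. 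For the stability term I would exploit that $C_t$ is supported on the $2k$ coordinates $P=\of{\{i_t,j_t\}\times\l}$, so $C_t=PC_tP$ and hence
\[ \|C_t\|_{*,t}^2=\Tr{(A_t+\aa I)C_t(A_t+\aa I)C_t}=\Tr{\tilde M C_t\tilde M C_t}\le \|\tilde M\|^2\,\|C_t\|_F^2, \]
where $\tilde M=P(A_t+\aa I)P$ is the $2k\times2k$ principal restriction. Because $\Tr{\tilde M}\le 2\aa k+2=\O1$ its operator norm is $\O1$, and $\|C_t\|_F^2=\O{k^2}$, so each round contributes $\O{k^2}$. Balancing $\tfrac{nk}{\eta}+\eta k^2T$ at $\eta\asymp\sqrt{n/(kT)}$ yields $\payoff{T}{\Sour}\ge\Optsdp{T}-\O{\sqrt{nk^3T}}$. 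The second inequality is then immediate, since every labeling $\f$ corresponds to a rank-one matrix in $K$, giving $\Optf{T}\le\Optsdp{T}$.

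\textbf{Main obstacle.} The delicate step is the passage from $\langle C_t,A_t-A_{t+1}\rangle$ to the local dual norm $\eta\|C_t\|_{*,t}^2$: this requires that the iterate moves by at most one Dikin step, i.e. that $-\logdet{A+\aa I}$ is strongly convex in the local norm along the feasible directions of $K$ and that $\eta\|C_t\|_{*,t}$ stays bounded. I expect the self-concordance of the log-determinant barrier, together with the localization identity $C_t=PC_tP$ that already tamed the stability term, to be what makes this step go through; verifying it carefully against the constrained body $K$ rather than the full PSD cone is the part I would expect to demand the most care.
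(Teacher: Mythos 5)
Your proposal follows the same overall architecture as the paper — FTRL over $\relpsd$ with the log-determinant regularizer $\logdet{A+\tfrac1k I}$, a diameter bound of $\O{nk}$, a per-round stability cost of $\O{k^2}$, and the observation that every labeling embeds as a rank-one matrix in $\relpsd$ so that $\Optf{T}\le\Optsdp{T}$ — but it establishes the crucial curvature property by a genuinely different route. The paper never touches the Hessian: it proves a strong-concavity inequality for $\logdet{\Cov}$ directly, by viewing $\logdet{\Cov}$ as the maximum entropy of any distribution with covariance $\Cov$, applying the strong concavity of entropy to a mixture of two Gaussians, and lower-bounding the total variation distance between Gaussians whose covariances differ in a single entry; a final lemma converts a payoff gap $\delta$ into such an entrywise gap of order $\delta/k$ relative to the $\O{1}$ diagonal of the $(i_t,j_t)$ block. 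Your argument instead works with the local dual norm $\Tr{(A_t+\tfrac1k I)C_t(A_t+\tfrac1k I)C_t}$ and the localization $C_t=PC_tP$, where your trace bound $\Tr{P(A_t+\tfrac1k I)P}=\O{1}$ plays exactly the role of the paper's final lemma. Your route buys contact with standard self-concordance machinery and would generalize to other spectral regularizers; the paper's buys a Hessian-free argument that plugs directly into the stated FTRL theorem and a conceptual reading of $\log\det$ as a surrogate entropy. The one step you flag — passing from $\left\langle C_t,A_t-A_{t+1}\right\rangle$ to $\eta\|C_t\|_{*,t}^2$ — is a real obligation rather than a formality: $-\logdet{A+\tfrac1k I}$ is \emph{not} a barrier for $\relpsd$ (it stays finite where the linear constraints $A_{(i,a)(j,b)}\ge 0$ become active), so the iterates may lie on the boundary of the feasible set and the Dikin-step argument must be run through the constrained first-order optimality conditions; you also need $\eta\|C_t\|_{*,t}=\O{\sqrt{nk/T}}\le\tfrac12$, which holds once $T=\Omega\of{nk}$ (and for smaller $T$ the trivial bound $2T\le\O{\sqrt{nkT}}$ already suffices). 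With those two points checked, your argument goes through and matches the paper's bound.
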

\addtocounter{theorem}{-1}
}

It is easy to see that the positive-semidefiniteness constraints
are necessary:
without those constraints, $\rel$ just treats each of the $n^2$
pairs $(i, j)$ as a separate problem, and so it is impossible
to achieve regret $o\of{\sqrt{n^2 T}}$..
Moreover, requiring consistency of the marginals does not help in general.
For example, in a max-cut problem we can assume by symmetry that the
marginal distribution of each item's label is uniform.

%

\subsection{Related work}

Some of the earliest work on online learning considered the problem
of competing with the best strategy from an unstructured set;
for example, see \cite{Blum98, Cover96, FV97, FS97}.
This work achieved optimal regret bounds, but the approach is not directly applicable
in settings where we would like to compete with an implicitly defined
and exponentially large class of strategies.
These protocols can be understood as instances of the general
frameworks of follow the regularized leader (FTRL) or mirror descent\cite{ftrl},
as can our algorithm.

Similar techniques have now been applied to a much broader range of problems,
including many settings with large, implicit classes of strategies \cite{BCK03, Freund97, HS97, KV05, TK02}.
In most cases, these protocols are applied to settings where the corresponding
offline decision problem is easy.
Our work continues in this vein, aiming to compete
with a large class of combinatorially defined strategies,
but we are interested in the setting where optimization
over the space of strategies is intractable.

Another line of work considers online learning against spaces of positive
semidefinite matrices.
The matrix multiplicative weights algorithm competes with the class
of positive definite matrices by using an implicit von Neumann entropy regularizer
\cite{MWM}.
In a similar vein,
\cite{online-burg, implicit-learning, seraph} compete with the class of metrics.
Although the problem statement is quite different,
we apply similar techniques.

The recent results of Hazan, Kale, and Shalev-Schwartz \cite{hks} in particular are closely related to our own.
They consider the setting in which learners produce outputs in $[-1, 1]$
and compete with the class of matrices which can be decomposed as a difference
of positive semidefinite matrices with small entries and small trace.
Their framework is equivalent to ours in the case of 2-local prediction for $k = \O{1}$.
In that setting
they obtain a regret bound of $\O{\sqrt{n \log\of{n} T}}$, which differs from our bound
by a $\log\of{n}$ factor.
This is the difference between a \emph{per item} convergence time of $\O{1}$
and $\O{\log(n)}$, which may be quite significant in some settings.
For example, this might be the difference between a single user
needing to wait $\O{\log(n)}$ time before receiving good recommendations
and needing to wait $\O{1}$ time.

Our techniques differ from those of \cite{hks} primarily by our choice of regularizer.
Like us, they work with a semidefinite relaxation for the space of labelings
(though they do not describe their approach in these terms)
and solve an appropriately regularized problem.
Their regularizer is the von Neumann entropy,
and this leads to their regret bound of $\O{\sqrt{n \log(n) T}}$.
We are able to improve this bound by using the log determinant.
Moreover, because the log determinant is a natural analog of entropy
in the setting of semidefinite programming relaxations for constraint satisfaction, 
we are able to give a conceptually simple analysis.

The log determinant regularizer is new in this setting
but has been applied before in online learning,
particularly in the context of metric learning \cite{online-burg, implicit-learning, seraph}.
Our use of this regularizer appears to be based on a fundamentally different motivation,
namely as an estimate of entropy given a matrix of moments.
The log determinant regularizer has also appeared (with a more similar motivation)
as a regularizer for relaxed inference over Markov random fields \cite{logdet-bound},
though in their context the analysis is quite different.
This regularizer is perhaps most common
as a barrier function in semidefinite programming;
there is an analogy between our approach
and a path-following algorithm for semidefinite programming,
though again our analysis is quite different.


\section{Our algorithm}\label{algorithm}

In this section we will exhibit an algorithm which achieves regret $\O{\sqrt{n k^3 T}}$ against
the class of all strategies $S_A$ for $A \in \relpsd$.
Because $S_A$ includes all strategies $S_{\ell}$ for $\ell : \u \rightarrow \l$,
this yields the desired result.

\subsection{Follow-the-Regularized Leader}
A common approach to achieving an online regret bound
is \emph{follow-the-regularized-leader} (FTRL).
FTRL chooses each move using a strategy
which maximizes a linear combination of
\emph{retrospective performance} and a \emph{regularization term}.
After nature reveals each payoff,
FTRL recomputes the optimum of this objective function
and uses it to make a decision in the next round.

If the regularization term is strongly concave,
then we can show that the maximizing strategy does not change much from one round
to the next.
If in addition the regularization term is bounded,
we can obtain a bound on the total regret.

For completeness, we will briefly describe the algorithm,
following the presentation in \cite{ftrl}.
We also give a precise statement of the bounds we need to prove.

Formally, let $\c$ be a convex set and
let $\reg : \c \rightarrow \R$ be a strongly concave function which will serve
as a regularizer.
Consider the algorithm:

\begin{algorithm}
\SetKwInOut{Input}{input}
\Input{A strongly concave regularizer $\reg$,
an $\eta > 0$,
and $\c$ a convex subset of $\rel$}
\BlankLine
\For{$t \in \mathbb{N}$}{
Set 
\[A = \arg \max_{A \in \c}{ \eta \sum_{s = 1}^{t-1} \payoff{s}{A} + \reg\of{A}}\]
Play $(a, b)$ with probability $A_{(i_t, a)(j_t, b)}$.
}
\caption{Follow the Regularized Leader\label{ftrl-algorithm}}
\end{algorithm}

\begin{theorem}\cite{ftrl}
Suppose that $\reg$ is bounded between $0$ and $M$ on $\c$.
Furthermore, suppose that whenever $\abs{\payoff{S}{A} - \payoff{S}{A'}} \geq \delta$,
\[ \reg\of{\epsilon A + (1 - \epsilon) A'} \geq 
\epsilon \reg\of{A} + (1 - \epsilon) \reg\of{A'} + 
\epsilon (1 - \epsilon) \gamma \delta^2\]
Then for an appropriate choice of $\eta$ the performance of algorithm~\ref{ftrl-algorithm}
is within an additive $\O{\sqrt{ \frac{T M}{\gamma} }}$ of
$\max_{A \in \c} \payoff{T}{A}$.
\end{theorem}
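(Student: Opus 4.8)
The plan is to follow the standard FTRL analysis, which splits the regret into a \emph{be-the-leader} comparison and a \emph{stability} estimate, and then optimizes over $\eta$. Throughout, write $A_t$ for the iterate chosen in round $t$, i.e. the maximizer of $\Phi_t\of{A} = \eta \sum_{s=1}^{t-1} \payoff{s}{A} + \reg\of{A}$, and let $A^\star = \arg\max_{A \in \c} \payoff{T}{A}$ be the offline optimum. The key structural fact I will use repeatedly is that each single-round payoff $\payoff{s}{\cdot}$ is linear in $A$, so $\Phi_t$ and $\reg$ differ only by a linear function and therefore share the same concavity modulus.

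First I would prove a be-the-leader inequality by induction on $T$. Treating the regularizer as a ``round zero'' objective $g_0 = \reg$ and setting $g_s = \eta\,\payoff{s}{\cdot}$ for $s \geq 1$, the iterate $A_{t+1}$ maximizes $\sum_{s=0}^{t} g_s$, and a one-line induction (comparing against $A_{T+1}$ in the inductive step) gives $\sum_{s=0}^{T} g_s\of{A_{s+1}} \geq \sum_{s=0}^{T} g_s\of{A^\star}$. Rearranging this and using $0 \leq \reg \leq M$ yields
\[ \payoff{T}{A^\star} - \sum_{t=1}^{T} \payoff{t}{A_t} \leq \sum_{t=1}^{T} \p{\payoff{t}{A_{t+1}} - \payoff{t}{A_t}} + \frac{M}{\eta}. \]
It then remains only to bound each stability term $\payoff{t}{A_{t+1}} - \payoff{t}{A_t}$.

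For the stability step I would set $\delta_t = \abs{\payoff{t}{A_{t+1}} - \payoff{t}{A_t}}$ and apply the hypothesized strong-concavity inequality at the midpoint $\bar A = \tfrac12\of{A_t + A_{t+1}}$, using the round's payoff $c_t$ and $\epsilon = \tfrac12$. Because $\Phi_t$ and $\Phi_{t+1}$ each differ from $\reg$ by a linear function, the same $\tfrac14\gamma\delta_t^2$ curvature gap holds for them. Using that $A_t$ maximizes $\Phi_t$ and $A_{t+1}$ maximizes $\Phi_{t+1}$, I would obtain $\Phi_t\of{A_t} - \Phi_t\of{A_{t+1}} \geq \tfrac12\gamma\delta_t^2$ together with the symmetric inequality for $\Phi_{t+1}$; adding the two and cancelling the shared $\Phi_t$ terms (using $\Phi_{t+1} = \Phi_t + \eta\,\payoff{t}{\cdot}$) leaves $\eta\,\p{\payoff{t}{A_{t+1}} - \payoff{t}{A_t}} \geq \gamma \delta_t^2$. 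In particular this forces $\payoff{t}{A_{t+1}} \geq \payoff{t}{A_t}$, so $\delta_t = \payoff{t}{A_{t+1}} - \payoff{t}{A_t} \leq \eta/\gamma$.

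Combining the two steps bounds the total regret by $\eta T/\gamma + M/\eta$; choosing $\eta = \sqrt{M\gamma/T}$ balances the two terms and gives the claimed $\O{\sqrt{TM/\gamma}}$. I expect the main obstacle to be the stability step: one must check carefully that the strong-concavity hypothesis, which is phrased only through single-round payoff \emph{differences} rather than an ambient norm, transfers from $\reg$ to the regularized objectives $\Phi_t$ and $\Phi_{t+1}$. The linearity of the payoffs is exactly what makes this transfer valid and what lets the additive curvature terms survive the two maximizer comparisons; making this cleanly rigorous is where the argument demands the most attention.
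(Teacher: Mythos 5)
Your proof is correct. The paper does not actually prove this statement --- it is imported wholesale from the cited FTRL reference --- and your argument is precisely the standard one from that literature: the follow-the-leader/be-the-leader induction giving regret at most $\sum_t \p{\payoff{t}{A_{t+1}} - \payoff{t}{A_t}} + M/\eta$, followed by the two-maximizer stability argument yielding $\delta_t \leq \eta/\gamma$, and balancing with $\eta = \sqrt{M\gamma/T}$. The one point you flag as delicate --- transferring the curvature hypothesis from $\reg$ to $\Phi_t$ and $\Phi_{t+1}$ --- is indeed immediate from linearity of $\payoff{t}{\cdot}$ in $A$, since adding a linear function changes both sides of the concavity inequality by exactly the same amount; no gap there.
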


We will aim to apply this theorem with $M = n k$ and $\gamma = \frac 1{k^2}$,
leading to an algorithm with regret $\sqrt{ n k^3 T}$, as desired.
We believe that this analysis can be tigthened to give a regret bound of $\sqrt{n k T}$
by generalizing Lemma~\ref{variation-distance} and consequently Lemma~\ref{logdet-convexity},
but we do not have a proof (and in either case our algorithm is most likely to be of interest when $k = \O{1}$).

\subsection{Regularizing relaxations.}
In order to apply FTRL we need to 
choose a convex set $\c$
and find a regularizer $\reg$ which has the desired properties.

One \emph{intractable} approach would be to take $\c$
to be the space of distributions over labelings,
and to take $\reg$ to be the entropy $\H$. 
\begin{definition}[Entropy]
For a discrete random variable $X$ with distribution $p(x)$,
the discrete entropy is $\H\of{X} = \E{-\log\of{p(X)}}$.
For a continuous random variable $X$ over $\mathbb{R}^N$ with density $p(x)$,
the differential entropy of $X$ is $\H\of{X} = \E{-\log\of{p(X)}}$.
\end{definition}

Since there are only $k^n$ possible labelings
the entropy is bounded by $n \log(k)$.
Moreover, the following standard lemma shows that the entropy is strongly concave
(the proof is in the appendix):
\begin{lemma}\label{entropy-convexity}
If $P$ and $Q$ are probability distributions with total variation distance $\delta$,
then 
\[\H\of{\epsilon P + (1 - \epsilon) Q} 
\geq \epsilon\H\of{P} + (1 - \epsilon) \H\of{Q} + \epsilon (1 - \epsilon) \delta^2, \]
where $\H$ is either the discrete entropy or the differential entropy.
\end{lemma}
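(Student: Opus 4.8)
The plan is to prove the inequality pointwise for the concave function $f\of{t} = -t\log t$ and then integrate. Writing $p$ and $q$ for the densities (respectively mass functions) of $P$ and $Q$, and recalling that $\H\of{P} = \int f\of{p\of{x}}\,dx$, with the integral replaced by a sum in the discrete case, the quantity I must bound from below is
\[ \H\of{\epsilon P + (1-\epsilon)Q} - \epsilon \H\of{P} - (1-\epsilon)\H\of{Q} = \int g\of{x}\,dx, \]
where $g\of{x} = f\of{\epsilon p + (1-\epsilon)q} - \epsilon f\of{p} - (1-\epsilon) f\of{q}$ is the Jensen gap of $f$ at the pair $p\of{x}, q\of{x}$. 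Since $f$ is concave this integrand is nonnegative; the content of the lemma is a quantitative lower bound on it.

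The first step is a pointwise estimate. I would fix $x$ and write $a = p\of{x}$, $b = q\of{x}$, and $m = \epsilon a + (1-\epsilon)b$. Expanding $f\of{a}$ and $f\of{b}$ around $m$ by Taylor's theorem with Lagrange remainder, the first-order terms cancel because $\epsilon\of{a-m} + (1-\epsilon)\of{b-m} = 0$, leaving
\[ g\of{x} = -\tfrac12\left[ \epsilon (1-\epsilon)^2 f''\of{\xi_1} + (1-\epsilon)\epsilon^2 f''\of{\xi_2} \right]\of{a-b}^2 \]
for some $\xi_1, \xi_2$ lying between $a$ and $b$. Since $f''\of{t} = -1/t$ and $\xi_1, \xi_2 \leq \max\of{a,b}$, we have $-f''\of{\xi_i} \geq 1/\max\of{a,b}$, and the bracket simplifies to give the pointwise bound
\[ g\of{x} \geq \frac{\epsilon(1-\epsilon)}{2}\cdot\frac{\of{p\of{x}-q\of{x}}^2}{\max\of{p\of{x},q\of{x}}}. \]

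It remains to integrate this bound and convert the resulting $\chi^2$-like quantity into total variation distance, which is where I expect the main work to lie. By Cauchy--Schwarz,
\[ \of{\int \abs{p - q}\,dx}^2 \leq \of{\int \frac{\of{p-q}^2}{\max\of{p,q}}\,dx}\of{\int \max\of{p,q}\,dx}. \]
Using $\max\of{p,q} = \tfrac12\of{p + q + \abs{p-q}}$ gives $\int \max\of{p,q}\,dx = 1 + \delta \leq 2$, where $\delta = \tfrac12\int\abs{p-q}\,dx$ is the total variation distance, so that $\int\abs{p-q}\,dx = 2\delta$. Rearranging then yields $\int \of{p-q}^2/\max\of{p,q}\,dx \geq 4\delta^2/(1+\delta) \geq 2\delta^2$, and combining with the pointwise bound gives $\int g \geq \epsilon(1-\epsilon)\delta^2$, as required. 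The same computation works verbatim for the discrete entropy, replacing every integral by a sum over the support. The main obstacle is precisely this final conversion: the pointwise estimate naturally produces a $\chi^2$-type divergence, and it is the Cauchy--Schwarz step together with the normalization $\int\max\of{p,q}\,dx = 1+\delta$ that lets me downgrade it to the weaker total variation bound the lemma asks for, while keeping the constants clean enough to land exactly at $\epsilon(1-\epsilon)\delta^2$.
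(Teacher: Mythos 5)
Your proof is correct and follows essentially the same route as the paper's: a pointwise second-order (strong concavity) estimate for $t \log t$ yielding a $\chi^2$-type lower bound on the Jensen gap, followed by Cauchy--Schwarz to convert it into total variation distance. The only differences are cosmetic --- you normalize by $\max\of{p,q}$ with $\int \max\of{p,q} = 1+\delta$ where the paper uses $p+q$ with $\sum (p+q) = 2$, and you treat the continuous case directly rather than as a limit of discretizations --- and your bookkeeping actually lands on the stated constant more cleanly than the paper's.
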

\proofatend
Because the differential entropy can be written as a limit
of discrete entropies under partitions,
it is sufficient to verify the claim for discrete entropy.

Consider an enumeration of the union
of the supports of $P$ and $Q$, $x_1, x_2, \ldots$.
Let $p(x_i)$ be the probability that $P$ assigns to $x_i$
and let $q(x_i)$ be the probability that $Q$ assigns to $x_i$.
For $z \in \R$, let $\H\of{z} = z \log(z)$.
Then
\begin{align*}
\H\of{\epsilon P + (1 - \epsilon) Q}
&= \sum{ \H\of{\epsilon p(x_i) + (1 - \epsilon) q(x_i)}} \\
&\geq \sum{ \epsilon \H\of{p(x_i)} + (1 - \epsilon) \H\of{q(x_i)} 
+ \frac 12 \epsilon(1 - \epsilon) \frac {\of{p(x_i) - q(x_i)}^2}{p(x_i) + q(x_i)}} \\
&\geq \epsilon \H\of{P} + (1 - \epsilon)\H\of{Q} + \frac 14 \epsilon(1 - \epsilon) \of{\sum{\abs{p(x_i) - q(x_i)}}}^2 \\
&= \epsilon \H\of{P} + (1 - \epsilon) \H\of{Q} + \frac 14 \epsilon (1 - \epsilon) \delta^2
\end{align*}
where the first inequality holds because $\H''(z) = \frac 1z$
and the second inequality is by Cauchy-Schwarz and the fact that $\sum\of{p(x_i) + q(x_i)} = 2$.
\endproofatend

Of course, even representing a probability distribution over assignments $\u \rightarrow \l$
is intractable.
Moreover, for our application it is unnecessary:
because our payoffs depend on the labels of pairs of items,
we only need to represent the pairwise marginal distributions of our labeling,
i.e. the probabilities $\P{\f\of{i} = a \wedge \f\of{j} = b}$.
This is precisely the information encoded by some $\A \in \rel$.
This motivates us to instead take the set $\c = \rel$,
and search for some regularizer that can play the same role as $\H$.

Unfortunately, the space $\rel$ is too large,
and regardless of our choice of regularizer
there are information-theoretic obstructions to obtaining
a sub-quadratic regret bound.

However, if we take the space $\relpsd$, the situation changes entirely.
Although there still need not be an actual distribution which satisfies
\[\P{\f\of{i} = a \wedge \f\of{j} = b} = \A_{(i, a)(j, b)},\]
we now \emph{can} find a jointly normally distributed family of random variables $X_{(i, a)}$
such that 
\[ \E{X_{(i, a)} X_{(j, b)}} = A_{(i, a)(j, b)}.\]
Because the gaussian is the maximum entropy distribution
subject to these moment constraints,
its differential entropy provides a natural upper bound on the entropy
of a putative probability distribution which does have moments
given by $A$.
This motivates using the entropy of this gaussian as our regularizer $\reg$.

The differential entropy of a gaussian with covariance matrix $\Cov$
is $\O{\logdet{\Cov}}$.
This motivates us to explore the suitability of this function as a regularizer.
In fact we smooth this function to $\logdet{\Cov + \frac 1k I}$
to account for the difference between a discrete entropy and a differential entropy
(as in \cite{logdet-bound}).
For convenience of normalization, we actually take $\reg\of{\A} = \logdet{k A + I}$.

\subsection{The log-determinant regularizer}

In this section write $\c = \relpsd$ and $\reg\of{\A} = \logdet{k \A+ I}$.
It is well known that maximization of concave functions over $\relpsd$
is tractable,
so it remains to show that this choice of $\reg$ is
bounded and strongly concave.

\begin{lemma}
For any $\A \in \relpsd$, $0 \leq \reg\of{\A} \leq nk$.
\end{lemma}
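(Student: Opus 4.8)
The plan is to handle the two inequalities separately, working throughout with the spectral decomposition of $\A$. Since $\A \in \relpsd$ is positive semidefinite, write its eigenvalues as $\lambda_1, \ldots, \lambda_{nk} \geq 0$. Then $k\A + I$ has eigenvalues $1 + k\lambda_i$, so $\reg\of{\A} = \logdet{k\A + I} = \sum_i \log\of{1 + k\lambda_i}$, and both bounds reduce to estimates on this sum.

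For the lower bound, each $\lambda_i \geq 0$ forces $1 + k\lambda_i \geq 1$, so every term $\log\of{1 + k\lambda_i}$ is nonnegative and hence $\reg\of{\A} \geq 0$.

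For the upper bound, I would first establish the trace bound $\tr\of{\A} \leq n$, which is the only step that uses the pseudodistribution structure rather than positive semidefiniteness alone. For each item $i$, consider the $k \times k$ diagonal block $B_i = \of{A_{(i,a)(i,b)}}_{a,b}$. The defining constraints of $\rel$, applied with $j = i$, say exactly that the entries of $B_i$ are nonnegative and sum to $1$; since the diagonal entries sit among these nonnegative entries, $\tr\of{B_i} = \sum_a A_{(i,a)(i,a)} \leq \sum_{a,b} A_{(i,a)(i,b)} = 1$. Summing over the $n$ items gives $\tr\of{\A} = \sum_i \tr\of{B_i} \leq n$. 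Combining this with the elementary inequality $\log\of{1 + x} \leq x$ yields $\reg\of{\A} = \sum_i \log\of{1 + k\lambda_i} \leq k \sum_i \lambda_i = k\,\tr\of{\A} \leq nk$, as desired.

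The only content-bearing step is the trace bound $\tr\of{\A} \leq n$, and I expect it to be the main (though still easy) obstacle: it is where nonnegativity of the entries and the block normalization $\sum_{a,b} A_{(i,a)(i,b)} = 1$ must be combined, and it would be a mistake to try to read it off from positive semidefiniteness alone. Everything surrounding it is a one-line eigenvalue computation.
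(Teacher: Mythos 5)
Your proof is correct and follows essentially the same route as the paper's: the lower bound via the eigenvalues of $k\A + I$ being at least $1$, and the upper bound via the trace bound on $\A$ coming from the normalization of the diagonal blocks. The only differences are cosmetic and in your favor: you correctly derive $\tr\of{\A} \leq n$ (the paper asserts equality, which the stated definition of $\rel$ does not quite guarantee), and you finish with $\log(1+x) \leq x$ where the paper uses the equal-eigenvalue (AM--GM) maximization of the determinant subject to the trace constraint.
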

\begin{proof}
Since $\A \geq 0$, each eigenvalue of $k \A+I$ is at least $1$
and hence $\logdet{\A + \frac 1k I} \geq 0$.
On the other hand, $\tr\of{\A} = n$,
so $\tr\of{\A + \frac 1k I} = 2kn$.
Subject to this trace condition, $\det\of{\A+\frac 1k I}$ is maximized
if all eigenvalues are equal, i.e. $k \A + I = 2 I$.
Hence $\logdet{\A + \frac 1k I} \leq \logdet{2 I} = n k$.
\end{proof}

The strong concavity of $\logdet{k \A+I}$ is equivalent
to the strong concavity of $\logdet{\Cov}$.
In order to verify strong concavity of $\logdet{\Cov}$,
we utilize its characterization
as the entropy of a gaussian with covariance matrix $\Cov$
and apply the strong concavity of entropy.
First, we need the following standard lemma
(for example, see \cite{ct-eit-91}):

\begin{lemma}\label{gaussian-entropy}
For any distribution $X$ over $\R^{nk}$ with covariance matrix $\Cov$,
the differential entropy of $X$ is at most 
$\frac 12 \logdet{\Cov} + H_0(nk)$,
where $H_0$ is independent of $X$ and $\Cov$.
Moreover, equality is attained if $X$ is gaussian.
\end{lemma}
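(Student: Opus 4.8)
The plan is to prove this by the standard maximum-entropy characterization of the Gaussian, realized through non-negativity of relative entropy. First I would note that differential entropy is invariant under translation, so I may assume without loss of generality that $X$ has mean zero. I would restrict attention to the case where $X$ is absolutely continuous with density $p$, since otherwise $\H\of{X} = -\infty$ and the bound is vacuous, and to the case $\Cov \succ 0$ (a singular $\Cov$ forces $\logdet{\Cov} = -\infty$ and the distribution lives on a proper subspace, so the inequality again holds trivially; in any event the only case we actually invoke has $\Cov = k\A + I \succ 0$). Letting $g$ denote the density of the mean-zero Gaussian with covariance $\Cov$, a direct computation gives $\H\of{g} = \tfrac 12 \logdet{\Cov} + \tfrac{nk}{2}\log\of{2\pi e}$. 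This simultaneously identifies the constant as $H_0(nk) = \tfrac{nk}{2}\log\of{2\pi e}$ and verifies that equality holds when $X$ is Gaussian.

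The heart of the argument is to compare $p$ against $g$ via the relative entropy $D\of{p \,\|\, g} = \int p \log\of{p/g}$, which is nonnegative and expands as $-\H\of{X} - \int p \log g$. The key step is to evaluate the cross term $\int p \log g$. Since $\log g\of{x} = -\tfrac12 x^{\top}\Cov^{-1}x - \tfrac12\logdet{2\pi\Cov}$ is a quadratic form in $x$, its expectation under $p$ depends only on the second moments of $X$: using $\E{X^{\top}\Cov^{-1}X} = \tr\of{\Cov^{-1}\E{XX^{\top}}} = \tr\of{\Cov^{-1}\Cov} = nk$, I obtain $\int p \log g = \int g \log g = -\H\of{g}$, precisely because $p$ and $g$ share the same covariance. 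Substituting back yields $0 \le D\of{p\,\|\,g} = \H\of{g} - \H\of{X}$, that is, $\H\of{X} \le \tfrac12\logdet{\Cov} + H_0(nk)$, as desired.

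For the ``moreover'' clause I would invoke strict non-negativity of relative entropy: $D\of{p\,\|\,g} = 0$ forces $p = g$ almost everywhere, so equality holds exactly when $X$ is Gaussian. The main obstacle is genuinely just the cross-term bookkeeping in the second paragraph — the observation that $\int p\log g$ is controlled entirely by the matched covariance, so that the intractable $\int p \log g$ collapses to the closed-form $-\H\of{g}$. Everything else is translation invariance together with the explicit Gaussian entropy formula.
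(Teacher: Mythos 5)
Your proof is correct. The paper does not prove this lemma at all---it is stated as a standard fact with a citation to Cover and Thomas---and the argument you supply (reduce to mean zero, expand $D\of{p\,\|\,g}\geq 0$, and observe that $\int p\log g$ depends only on the matched second moments via $\E{X^{\top}\Cov^{-1}X}=\tr\of{\Cov^{-1}\Cov}=nk$) is exactly the standard maximum-entropy proof from that reference, including the correct constant $H_0(nk)=\tfrac{nk}{2}\log\of{2\pi e}$ and the handling of the degenerate cases.
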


We also need to show that gaussians which differ in one moment
necessarily have large variation distance (proved in the appendix):
\begin{lemma}\label{variation-distance}
Let $\G_1$ and $\G_2$ be gaussians
with covariance matrices $\Cov_1$ and $\Cov_2$.
Suppose that for some $i, j$ we have
\[\abs{(\Cov_1)_{ij} - (\Cov_2)_{ij}} \geq \delta \of{ \of{\Cov_1}_{ii} + \of{\Cov_1}_{jj} + \of{\Cov_2}_{ii} + \of{\Cov_2}_{jj}  }. \]
Then the total variation distance between $\G_1$
and $\G_2$ is $\Omega\of{\delta}$.
\end{lemma}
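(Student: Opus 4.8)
The plan is to reduce everything to a one-dimensional statement about the variances of suitable linear projections. Since applying a fixed measurable map to both arguments cannot increase total variation distance (data processing), it suffices to exhibit a single linear functional $Z = \alpha X_i + \beta X_j$ whose law differs noticeably between $\G_1$ and $\G_2$: the distance between the two projected laws lower-bounds the distance between $\G_1$ and $\G_2$. I will take the Gaussians to be centered (this is the case that arises in our application, where $\E{X_{(i,a)}X_{(j,b)}} = A_{(i,a)(j,b)}$), so that under $\G_k$ the projection $Z$ is the centered Gaussian $N(0,Q_k)$ with $Q_k(\alpha,\beta) = \alpha^2 (\Cov_k)_{ii} + 2\alpha\beta (\Cov_k)_{ij} + \beta^2 (\Cov_k)_{jj}$.

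The key elementary ingredient is a lower bound for the distance between two centered one-dimensional Gaussians purely in terms of their variances: there is a universal constant $c>0$ with $\mathrm{TV}\of{N(0,v_1),N(0,v_2)} \ge c\,\frac{\abs{v_1-v_2}}{v_1+v_2}$. This follows because the left side is a strictly increasing function of the ratio $\max(v_1,v_2)/\min(v_1,v_2)$ that vanishes to first order at ratio $1$ and tends to $1$ as the ratio grows, while $\abs{v_1-v_2}/(v_1+v_2)$ has the same qualitative behaviour; one compares the two over the whole range. Given this, it remains only to choose $(\alpha,\beta)$ so that the projected variances $Q_1,Q_2$ have relative gap $\Omega(\delta)$.

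Write $D_a = (\Cov_1)_{ii}-(\Cov_2)_{ii}$, $D_b = (\Cov_1)_{jj}-(\Cov_2)_{jj}$, $D_c = (\Cov_1)_{ij}-(\Cov_2)_{ij}$, and $s = (\Cov_1)_{ii}+(\Cov_1)_{jj}+(\Cov_2)_{ii}+(\Cov_2)_{jj}$, so the hypothesis reads $\abs{D_c}\ge \delta s$. I restrict to the three projections $(\alpha,\beta)\in\b{(1,0),(0,1),(1,\sigma)}$ with $\sigma = \mathrm{sign}(D_c)$, whose variance differences $Q_1-Q_2$ are $D_a$, $D_b$, and $D_a+D_b+2\sigma D_c$ respectively. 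For each of these the sum $Q_1+Q_2$ is $O(s)$: it is at most $s$ for the two axis projections, and at most $2s$ for $(1,\sigma)$, using $\abs{(\Cov_k)_{ij}}\le \sqrt{(\Cov_k)_{ii}(\Cov_k)_{jj}}\le \tfrac12\of{(\Cov_k)_{ii}+(\Cov_k)_{jj}}$. Now split into cases. If $\abs{D_a}\ge \tfrac{\delta s}{2}$ or $\abs{D_b}\ge\tfrac{\delta s}{2}$, the corresponding axis projection already has relative variance gap at least $(\delta s/2)/s = \delta/2$. Otherwise $\abs{D_a},\abs{D_b} < \delta s/2$, and the projection $(1,\sigma)$ has variance difference at least $2\abs{D_c}-\abs{D_a}-\abs{D_b} > 2\delta s - \delta s = \delta s$, so its relative gap is at least $(\delta s)/(2s)=\delta/2$.

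In every case some projection has relative variance gap at least $\delta/2$, so by the one-dimensional bound the projected laws are at distance $\Omega(\delta)$, and hence so are $\G_1$ and $\G_2$. The only genuinely delicate point is the one-dimensional variance-gap bound, where one must check that the constant is uniform across the full range of ratios, including the degenerate extremes $v_1=0$ and $v_1\gg v_2$ (both of which in fact give distance close to $1$); the multivariate-to-univariate reduction and the case analysis are then routine bookkeeping.
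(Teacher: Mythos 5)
Your proof is correct and follows essentially the same route as the paper's: both arguments reduce to the three directions $e_i$, $e_j$, $e_i+e_j$, normalized by the sum of the four relevant diagonal entries (your case split on $D_a$, $D_b$, $D_a+D_b+2\sigma D_c$ is the paper's polarization identity in disguise). The only difference is the final one-dimensional step, where the paper lower-bounds the total variation distance by evaluating the characteristic function $\exp\of{-\tfrac 12 u^T \Cov_k u}$ at a scaled multiple of the chosen direction, while you invoke a direct total-variation lower bound between centered one-dimensional Gaussians in terms of their relative variance gap; both are sound.
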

\proofatend
The following proof was suggested by George Lowther in response to a question by the author on \texttt{mathoverflow.net}.

Without loss of generality, assume $\of{ \of{\Cov_1}_{ii} + \of{\Cov_1}_{jj} + \of{\Cov_2}_{ii} + \of{\Cov_2}_{jj}  } = 1$,
so that $\Cov_1$ and $\Cov_2$ differ by at least $\delta$ in their $i, j$ entry.
Consider the characteristic function $\vp_k(u) = \Esub{x \sim \G_k}{ \exp\of{i u^T  x}}$.
For $u$ with real entries, $\vp_k(u)$ is the expectation of a function with absolute value $1$
on a samples drawn from $\G_k$,
so to bound the variation distance between $\G_1$ and $\G_2$
it suffices to exhibit some $u$ such that $\abs{\vp_1(u) - \vp_2(u)} = \Omega\of{\delta}$.
From the definition of the Gaussian we can compute
\[ \vp_k(u) = \exp\of{- \frac 12 u^T \Cov_k u}. \]

Let $e_i, e_j$ be the unit vectors with their non-zero entry in coordinates $i$, $j$, respectively.
Let $v$ be one of $e_i, e_j$, or $e_i + e_j$,
write $\alpha_k = v^T \Cov_k v$, and write $u = \frac {v}{\sqrt{\alpha_1 + \alpha_2}}$.
Then we have
\begin{align*}
\abs{\vp_1(u) - \vp_2(u)} 
&= \abs{ \exp\of{- \frac {\alpha_1}{ 2 (\alpha_1 + \alpha_2)}}  - \exp\of{- \frac {\alpha_2}{2 (\alpha_1 + \alpha_2)}}} \\
&= \Omega\of{\frac {\abs{\alpha_1 - \alpha_2}}{\alpha_1 + \alpha_2}} \\
&= \Omega\of{\abs{\alpha_1 - \alpha_2}}
\end{align*}
Where the second line follows from a constant lower bound on the derivative of $\exp\of{x}$ in the range $[-1, 0]$,
and the third line follows from our bound on 
$\of{ \of{\Cov_1}_{ii} + \of{\Cov_1}_{jj} + \of{\Cov_2}_{ii} + \of{\Cov_2}_{jj}  }$.

So it suffices to find some $v \in \left\{e_i, e_j, e_i + e_j\right\}$ with $\abs{v^T \of{\Cov_1 - \Cov_2} v} = \Omega\of{\delta}$.
But note that we can write
\begin{align*}
\delta &\leq \of{\Cov_1 - \Cov_2}_{ij}  \\
&= e_i^T \of{\Cov_1 - \Cov_2} e_j \\
&= \frac 12 \of{(e_i + e_j)^T \of{\Cov_1 - \Cov_2} (e_i + e_j) - e_i^T \of{\Cov_1 - \Cov_2} e_i - e_j^T \of{\Cov_1 - \Cov_2} e_j }
\end{align*}
and so one of the terms on the right hand side of the second line must have absolute value at least $\Omega\of{\delta}$,
as desired.

\endproofatend
Now we can prove that $\logdet{\Cov}$ is strongly concave:
\begin{lemma} \label{logdet-convexity}
Suppose that $\Cov_1$ and $\Cov_2$ are as in Lemma~\ref{variation-distance}.
Then
\[\logdet{(1 - \epsilon) \Cov_1 + \epsilon \Cov_2} 
\geq (1 - \epsilon) \logdet{\Cov_1} + \epsilon \logdet{\Cov_2} + 
\Omega\of{\epsilon(1-\epsilon)\delta^2}.\]
\end{lemma}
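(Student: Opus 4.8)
The strategy is to transfer the strong concavity of entropy (Lemma~\ref{entropy-convexity}) through the Gaussian characterization of $\logdet$ (Lemma~\ref{gaussian-entropy}), using the variation-distance bound (Lemma~\ref{variation-distance}) to produce the required quadratic gap. First I would fix the three Gaussians $\G_1, \G_2$, and $\G_m$ with covariance matrices $\Cov_1$, $\Cov_2$, and $\Cov_m = (1-\epsilon)\Cov_1 + \epsilon \Cov_2$ respectively. The key observation is that the linear mixture $(1-\epsilon)\G_1 + \epsilon \G_2$, as a distribution, is \emph{not} Gaussian, but it does have covariance matrix exactly $\Cov_m$ (since covariance is linear under mixing, up to cross-terms in the means — which vanish here because we take all Gaussians centered at the origin). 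Thus by Lemma~\ref{gaussian-entropy}, its differential entropy is at most $\frac{1}{2}\logdet{\Cov_m} + H_0(nk)$, since $\Cov_m$ is its covariance matrix and the Gaussian maximizes entropy for fixed covariance.

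The plan is then to sandwich. On one side, $\frac{1}{2}\logdet{\Cov_m} + H_0(nk) \geq \H\of{(1-\epsilon)\G_1 + \epsilon \G_2}$ by the preceding paragraph. On the other side, I apply Lemma~\ref{entropy-convexity} to the mixture of $\G_1$ and $\G_2$: writing $\delta'$ for their total variation distance, I get
\[
\H\of{(1-\epsilon)\G_1 + \epsilon \G_2} \geq (1-\epsilon)\H\of{\G_1} + \epsilon \H\of{\G_2} + \epsilon(1-\epsilon)\,(\delta')^2.
\]
Now $\G_1$ and $\G_2$ \emph{are} Gaussian, so by the equality case of Lemma~\ref{gaussian-entropy} we have $\H\of{\G_k} = \frac{1}{2}\logdet{\Cov_k} + H_0(nk)$ exactly. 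Chaining the two inequalities and cancelling the common additive constant $H_0(nk)$ from both sides gives
\[
\tfrac{1}{2}\logdet{\Cov_m} \geq (1-\epsilon)\tfrac{1}{2}\logdet{\Cov_1} + \epsilon\tfrac{1}{2}\logdet{\Cov_2} + \epsilon(1-\epsilon)(\delta')^2,
\]
and multiplying through by $2$ puts this in the desired form, provided $(\delta')^2 = \Omega(\delta^2)$.

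The final link is supplying the bound $\delta' = \Omega(\delta)$, which is exactly Lemma~\ref{variation-distance}: the hypothesis of the present lemma is stated to match that lemma's hypothesis verbatim, so the total variation distance between $\G_1$ and $\G_2$ is $\Omega(\delta)$. Substituting $(\delta')^2 = \Omega(\delta^2)$ into the chained inequality yields the $\Omega\of{\epsilon(1-\epsilon)\delta^2}$ gap and completes the proof.

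The main obstacle I anticipate is the bookkeeping around the additive constant $H_0(nk)$ and, more subtly, the direction of the inequalities: one must use Lemma~\ref{gaussian-entropy} as an \emph{upper} bound for the non-Gaussian mixture but as an \emph{equality} for the two pure Gaussians, and it is essential that these point the right way so that the entropy-concavity slack survives the transfer rather than being absorbed. A secondary point to verify carefully is that the differential entropy of the finite mixture $(1-\epsilon)\G_1 + \epsilon\G_2$ is well-defined and finite (the mixture has a bounded density away from the degenerate case), so that Lemma~\ref{entropy-convexity} applies to it in its differential-entropy form; this is where centering the Gaussians at the origin and assuming nondegeneracy of the covariances keeps everything clean.
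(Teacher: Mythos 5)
Your proposal is correct and follows essentially the same route as the paper's own proof: form the (non-Gaussian) mixture of $\G_1$ and $\G_2$, upper-bound its entropy by Lemma~\ref{gaussian-entropy} via its covariance matrix $(1-\epsilon)\Cov_1 + \epsilon\Cov_2$, lower-bound it by Lemma~\ref{entropy-convexity} together with the variation-distance bound of Lemma~\ref{variation-distance}, and chain. Your extra care with the factor of $\tfrac12$ and the constant $H_0(nk)$ only rescales the constant hidden in the $\Omega(\cdot)$, so nothing is lost.
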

\begin{proof}
Let $\G_i$ be the gaussian with covariance matrix $\Cov_i$.
Let $M$ be the probabilistic mixture of gaussians
which puts probability $(1 - \epsilon)$ on $\G_1$
and $\epsilon$ on $\G_2$.
By Lemma~\ref{gaussian-entropy}, $H_0(nk) + \logdet{(1 - \epsilon) \Cov_1 + \epsilon \Cov_2}$
is an upper bound on the entropy of any distribution with covariance matrix
$(1 - \epsilon) \Cov_1 + \epsilon \Cov_2$, and in particular on $M$.
Moreover, we can lower bound
the entropy of $M$ by Lemma~\ref{entropy-convexity} and our lower bound
on the variation distance between $\G_1$ and $\G_2$:
\begin{align*}
\logdet{(1 - \epsilon) \Cov_1 + \epsilon \Cov_2}
&\geq
\H\of{M} - H_0(nk) \\
&\geq (1 - \epsilon) \H\of{\G_1} + \epsilon \H\of{\G_2} + 
\Omega\of{\epsilon (1 - \epsilon)  \delta^2}  - H_0(nk)\\
&\geq (1 - \epsilon) \of{\H\of{\G_1} - H_0(nk)} + \epsilon \of{\H\of{\G_2} - H_0(nk)} + 
\Omega\of{\epsilon (1 - \epsilon)  \delta^2} \\
&=
(1 - \epsilon) \logdet{\Cov_1} + \epsilon \logdet{\Cov_2}
+ \Omega\of{\epsilon(1 - \epsilon) \delta^2}
\end{align*}
as desired.
\end{proof}
\begin{lemma}
If $\payoff{t}{A} - \payoff{t}{A'} \geq \delta$,
then \[\reg\of{\epsilon A + (1 - \epsilon) A'}
\geq \epsilon \reg\of{A} + (1 - \epsilon) \reg\of{A'}
+ \Omega\of{\epsilon (1 - \epsilon)\frac {\delta^2}{k^2}} \]
\end{lemma}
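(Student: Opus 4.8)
The plan is to reduce this directly to the strong concavity of $\logdet$ from Lemma~\ref{logdet-convexity}. Set $\Cov_1 = k A' + I$ and $\Cov_2 = k A + I$; since $A, A' \in \relpsd$ we have $\Cov_1, \Cov_2 \succeq I \succ 0$, so both are genuine (positive definite) covariance matrices to which Lemmas~\ref{variation-distance} and \ref{logdet-convexity} apply. Because $\reg\of{A} = \logdet{kA+I}$, linearity inside the determinant gives $\reg\of{\epsilon A + (1-\epsilon)A'} = \logdet{(1-\epsilon)\Cov_1 + \epsilon \Cov_2}$, with $\reg\of{A} = \logdet{\Cov_2}$ and $\reg\of{A'} = \logdet{\Cov_1}$. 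Thus the claimed inequality is exactly the conclusion of Lemma~\ref{logdet-convexity}, provided I can verify its hypothesis (the hypothesis of Lemma~\ref{variation-distance}) with a variation-distance parameter $\delta_{\mathrm{VD}} = \Omega\of{\delta/k}$, since then the error term $\Omega\of{\epsilon(1-\epsilon)\delta_{\mathrm{VD}}^2}$ becomes the desired $\Omega\of{\epsilon(1-\epsilon)\delta^2/k^2}$.

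To find a suitable entry, I would first turn the payoff gap into an $\ell_1$ statement: since $\abs{c_t} \leq 1$, the hypothesis gives $\delta \leq \payoff{t}{A} - \payoff{t}{A'} = \sum_{a,b} c_t\of{a,b}\of{A - A'}_{(i_t,a)(j_t,b)} \leq \sum_{a,b} \abs{\of{A-A'}_{(i_t,a)(j_t,b)}}$. Writing $p = (i_t,a)$ and $q = (j_t,b)$, the off-diagonal entry of $\Cov_2 - \Cov_1$ at $(p,q)$ equals $k\of{A-A'}_{pq}$, while the four diagonal terms in Lemma~\ref{variation-distance} are $kA_{pp}+1$, $kA'_{pp}+1$, $kA_{qq}+1$, $kA'_{qq}+1$. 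Consequently the variation-distance parameter certifiable at position $(a,b)$ is the ratio $\abs{\of{A-A'}_{(i_t,a)(j_t,b)}} \big/ \of{D_{ab} + 4/k}$, where $D_{ab} = A_{pp} + A'_{pp} + A_{qq} + A'_{qq}$ collects the four relevant diagonal entries.

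The hard part, and exactly where the smoothing by $\tfrac1k I$ earns its keep, is to beat the naive bound $\Omega\of{\delta/k^2}$ one gets by taking the single largest numerator, and instead certify $\Omega\of{\delta/k}$. I would argue by contradiction and averaging: if every ratio were below $\lambda$, then $\delta \leq \sum_{a,b}\abs{\of{A-A'}_{pq}} < \lambda \sum_{a,b}\of{D_{ab} + 4/k}$. The crucial observation is that summing the denominators over all $k^2$ positions costs only $\O{k}$, not $\O{k^2}$: each diagonal entry such as $A_{(i_t,a)(i_t,a)}$ does not depend on $b$, so it is counted $k$ times, and $\sum_a A_{(i_t,a)(i_t,a)} \leq \sum_{a,b} A_{(i_t,a)(i_t,b)} = 1$ by the pseudodistribution constraint; likewise for the other three families of diagonal terms, while the $k^2$ copies of $4/k$ contribute $4k$. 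Hence $\sum_{a,b}\of{D_{ab} + 4/k} = \O{k}$, which forces $\lambda = \Omega\of{\delta/k}$ and so exhibits a position with $\delta_{\mathrm{VD}} = \Omega\of{\delta/k}$.

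With this $\delta_{\mathrm{VD}}$ in hand I would feed the chosen indices $p,q$ into Lemma~\ref{variation-distance} and conclude via Lemma~\ref{logdet-convexity}, noting that all hidden constants are absolute (independent of $n$ and $k$). The only bookkeeping subtlety is the placement of the mixing weights $\epsilon$ versus $1-\epsilon$, which is harmless: both the hypothesis of Lemma~\ref{variation-distance} and the error term $\Omega\of{\epsilon(1-\epsilon)\delta_{\mathrm{VD}}^2}$ are symmetric under the swap $\of{\Cov_1, 1-\epsilon} \leftrightarrow \of{\Cov_2, \epsilon}$, so the labeling of $A$ and $A'$ does not affect the bound.
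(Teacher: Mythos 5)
Your proposal is correct and takes essentially the same route as the paper's own proof: both turn the payoff gap into an $\ell_1$ bound on the $(i_t,j_t)$ block of $A - A'$, then use an averaging argument over the $k^2$ entries (exploiting that the relevant diagonal entries sum to $\O{k}$, not $\O{k^2}$) to find a single entry satisfying the hypothesis of Lemma~\ref{variation-distance} with parameter $\Omega\of{\delta/k}$, and conclude via Lemma~\ref{logdet-convexity}. If anything, your write-up is slightly more careful than the paper's appendix on two minor points (using an inequality rather than an equality for the payoff difference, and a $\leq$ rather than $=$ for the diagonal-block sums), but the argument is the same.
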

\begin{proof-sketch}
If $\payoff{t}{A} - \payoff{t}{A'} \geq \delta$,
then $kA$ and $kA'$ differ by at least $\frac {\delta}{k}$
in the average entry in the block corresponding to the pair $(i_t, j_t)$.
The average diagonal entry of $kA + I$ or $kA' + I$ is only $2$.
So there is necessarily some entry in the block corresponding to the pair $(i_t, j_t)$ for which the conditions of Lemma~\ref{logdet-convexity} apply,
with parameter $\frac {\delta}{8k}$.
\end{proof-sketch}
\proofatend
Write $d_{ab} = k \abs{A_{(i_t, a)(j_t, b)} - A'_{(i_t, a)(j_t, b)}}$,
$x_a = k A_{(i_t, a)(i_t, a)} + k A'_{(i_t, a)(i_t, a)} + 2$,
$y_b = k A_{(j_t, b)(j_t, b)} + k A'_{(j_t, b)(j_t, b)} + 2$.
Note that $\sum_a x_a = \sum_b y_b = 4k$.
Note that
\[ 
\sum_{a, b} d_{ab} = k \abs{\payoff{t}{A} - \payoff{t}{A'}} \geq k \delta.
\]
Since $\sum_{a,b} (x_a + y_b) = k \sum x_a + k \sum y_b = 8k^2$,
there must be some $a, b$ with $d_{ab} \geq \frac 1{8k} (x_a + y_b)$.
The desired result then follows from Lemma~\ref{logdet-convexity}.
\endproofatend

Together with the analysis of FTRL, this completes our algorithm.

\section{Conclusion}

\subsection{Further work}

\subsubsection{$k = \omega(1)$}

Our algorithm is optimal in the range $k = \O{1}$,
but has very bad performance for large $k$;
similarly,
the algorithm of \cite{hks} does not generally apply to large $k$.
Solving the problem for large $k$
would give a robust solution to online gambling 
problem which would also apply directly to many natural generalizations
and related learning problems.

It seems quite likely that follow the regularized leader 
can achieve regret $\sqrt{n \log(k) T}$ using
the relaxation $\relpsd$,
given an appropriate choice of regularizer.

In the case of $n = 1$,
this problem reduces to conventional learning from experts,
for which an entropy regularization
is suitable.
Intuitively, what is needed is a way to integrate
this entropy regularization (or the von Neumann entropy regularization of
matrix multiplicative weights) with the $\log \det$ regularization 
that performs well for large $n$.
One way of understanding this problem
is as a search for a notion of entropy
that applies to arbitrary matrices $\A \in \relpsd$.
The bound $\logdet{A+I}$ treats each indicator variable $\f(i) = a$
separately, and so ignores the fact that for each $i$
the events $\f(i) = a$ are mutually exclusive.
This causes it to be a conservative overestimate for the entropy,
and so to yield a suboptimal regret bound.

\subsubsection{$r > 2$}

In contrast with the $k = \omega(1)$ case,
extending these results to $r > 2$ seems likely to be extremely difficult.
It may be possible to apply existing semidefinite programming hierarchies,
but if so it requires maintaining more than the first $r$ moments of a pseudodistribution,
and would be a new and interesting form of evidence for the usefulness of higher
levels of these hierarchies.
If existing hierarchies cannot solve the problem,
it seems likely that solving $r$-local prediction problems
for $r > 2$ would require a significant conceptual developments.
It is also possible that there are complexity-theoretic obstructions,
but proving lower bounds under conventional assumptions seems to be out of reach for usual techniques.

Note that we can immediately apply our results
to obtain regret $\O{\sqrt{n^{\rr} T}}$ in the case $k = \O{1}$,
by assigning labels in $\l^{\rr}$ to each set of $\left \lceil \frac r2 \right \rceil $ items
from $\u$.
Understanding whether any efficient algorithm can do better
may shed light on learning more broadly but also on constraint satisfaction.

\subsubsection{Richer structure}

Local prediction problems have particularly clean structure
which makes them amenable to semidefinite programming (at least in simple cases).
In many applications of interest,
predictions have slightly richer structure.
For example, a prediction might depend on the label of item $i$
together with the label of one additional item 
which is chosen adaptively based on the label of $i$.
This would occur for example if each of $n$ items belonged to one of $k$ clusters,
and the properties of the items were stochastic functions
of the (unobserved) characteristics of the clusters that contained them.
It is interesting to ask how far we can go in this direction before impossibility
results kick in; even relatively modest progress might allow completely automatic
inference in a wide range of natural models.

\subsection{Discussion}

We have introduced the family of \emph{local prediction problems,}
and left most questions concerning this model open.
We have shown that the simplest local prediction problems
can be solved essentially optimally by a particularly natural algorithm,
providing the first asymptotically optimal regret bounds
for the online max-cut problem.
In our view the conceptual simplicity of our analysis is a significant strength;
we take its simplicity
(together with its improvement over the previous state of the art)
as some evidence that our view of online local learning
is a productive one, and that further developments may be possible.

\bibliographystyle{plain}

\bibliography{thesis}

\section{Appendix}

\printproofs

\end{document}